\newcommand{\agr}{\textnormal{AGR}}
\newcommand{\ourmethod}{FMPA\xspace}
\newcommand{\ct}[1]{\texttt{#1}}
\newtheorem{example}{Example}
\newtheorem{theorem}{Theorem}
\newtheorem{definition}{Definition}
\newtheorem{lemma}{Lemma}
\title{Denial-of-Service or Fine-Grained Control: Towards Flexible Model Poisoning Attacks on Federated Learning}
\author{
Hangtao Zhang$^{1}$\and
Zeming Yao$^2$\and
Leo Yu Zhang$^3$\and
Shengshan Hu$^{1,4,5,6,7}$\and
Chao Chen$^8$\and \\
Alan Liew$^3$\And
Zhetao Li$^9$
\affiliations
$^1$School of Cyber Science and Engineering, Huazhong University of Science and Technology\\
$^2$Swinburne University of Technology\\
$^3$Griffith University\\
$^4$National Engineering Research Center for Big Data Technology and System\\
$^5$Services Computing Technology and System Lab\\
$^6$Hubei Key Laboratory of Distributed System Security\\
$^7$Hubei Engineering Research Center on Big Data Security\\
$^8$RMIT University\\
$^9$Xiangtan University
\emails
\{zhanghangtao7, nick.yao.zm\}@gamil.com,
leo.zhang@griffith.edu.au,
hushengshan@hust.edu.cn,
chao.chen@rmit.edu.au,
a.liew@griffith.edu.au,
liztchina@hotmail.com
}
\begin{document}

\maketitle

\begin{abstract}  
    \textit{Federated learning} (FL) is vulnerable to poisoning attacks, where adversaries corrupt the global aggregation results and cause \textit{denial-of-service} (DoS). 
    Unlike recent model poisoning attacks that optimize the amplitude of malicious perturbations along certain prescribed directions to cause DoS, we propose a \underline{\textbf{F}}lexible \underline{\textbf{M}}odel \underline{\textbf{P}}oisoning \underline{\textbf{A}}ttack (\ct{\ourmethod}) that can achieve versatile attack goals.
    We consider a practical threat scenario where no extra knowledge about the FL system (e.g., aggregation rules or updates on benign devices) is available to adversaries. \ct{\ourmethod} exploits the global historical information to construct an estimator that predicts the next round of the global model as a benign reference. It then fine-tunes the reference model to obtain the desired poisoned model with low accuracy and small perturbations. 
    Besides the goal of causing DoS, \ct{\ourmethod} can be naturally extended to launch a fine-grained controllable attack, making it possible to precisely reduce the global accuracy. 
    Armed with precise control, malicious FL service providers can gain advantages over their competitors without getting noticed, hence opening a new attack surface in FL other than DoS. Even for the purpose of DoS, experiments show that \ct{\ourmethod} significantly decreases the global accuracy, outperforming six state-of-the-art attacks.
\end{abstract}

\begin{figure*}[htp]
    \centering
\includegraphics[width=0.6\textwidth]{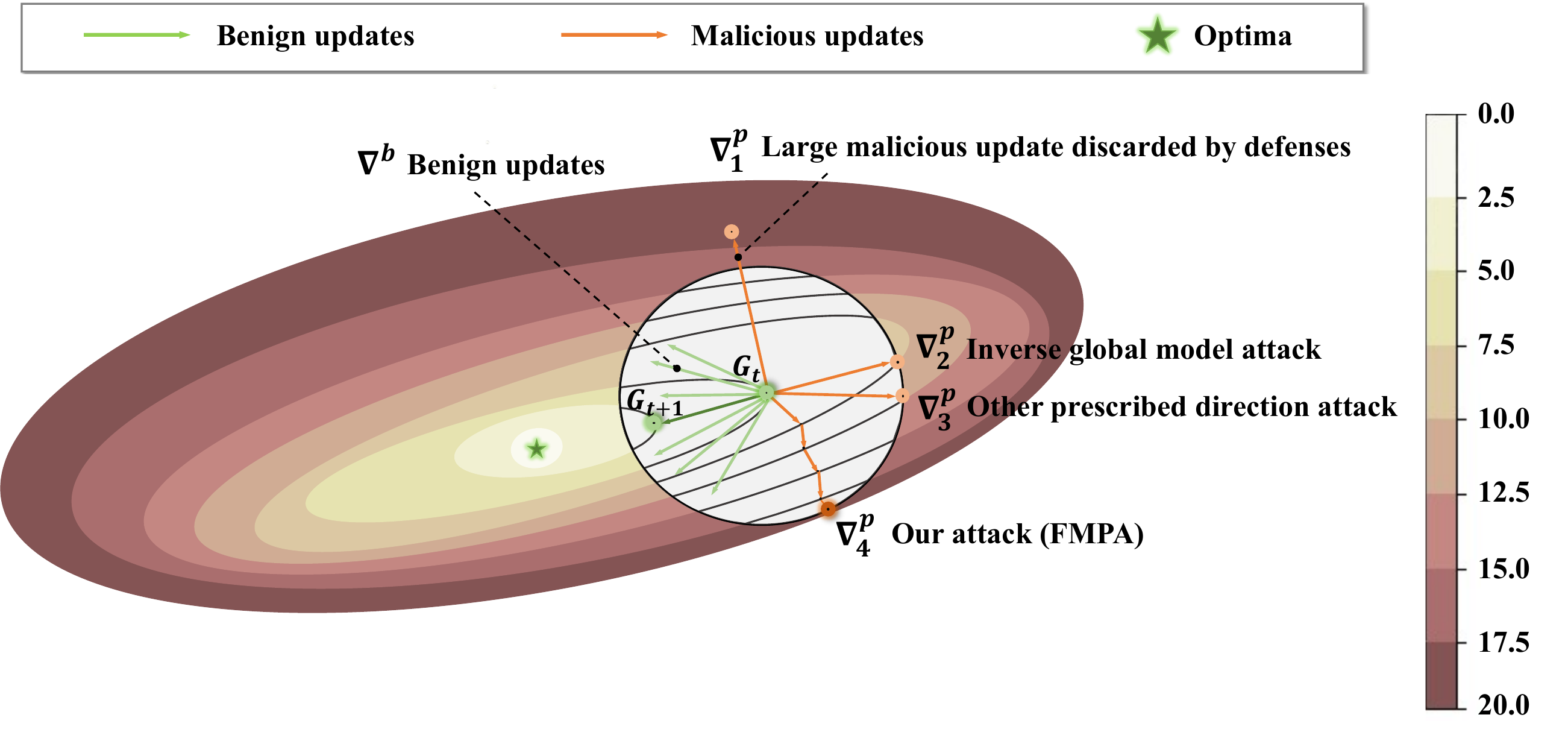}
    \caption{\textbf{A schematic of our attack:} \bm{$G_t$} and \bm{$G_{t+1}$} represent the global model for for rounds $t$ and $t+1$, respectively. The gray circle indicates the detection scope of a specific defense, thus red arrows (malicious local updates) outside the scope will be discarded (e.g., \bm{$\nabla_1^p$}).
    The red arrows (e.g., \bm{$\nabla_2^p$} and \bm{$\nabla_3^p$}) represent current wisdom used for designing malicious updates, where our \ct{\ourmethod} (\bm{$\nabla_4^p$}) can achieve the maximum loss (best attack effect) with the same amount of perturbation.}
    \label{Fig:SchematicDigram} 
\end{figure*}

\section{Introduction}

\textit{Federated learning} (FL)~\cite{GoogleAIBlog} has recently emerged as a new distributed learning paradigm. It enables multiple clients (e.g., mobile devices) to jointly learn a global model without revealing their privacy-sensitive data. Specifically, in FL, numerous clients can train local models on their private datasets and iteratively upload models/updates to the central server (e.g., Apple, Google). The server will aggregate their updates to obtain global model updates using an \textit{aggregation algorithm} (AGR). 

However, Due to its decentralized nature, FL is susceptible to poisoning attacks. By poisoning local data, malicious clients (i.e., clients compromised by an adversary) can drive the global model to learn the wrong knowledge or embed backdoors (known as \textit{data poisoning attack})~\cite{ValeTolpegin2020DataPA,HongyiWang2020AttackOT}. Or they can submit malicious models to the central server to disrupt the training process (a.k.a. \textit{model poisoning attack})~\cite{XingchenZhou2021DeepMP,pmlr-v115-xie20a}, such that the learnt model suffers from high testing error indiscriminately and eventually causes a \textit{denial-of-service} (DoS).

These attacks can be either untargeted, with the goal of reducing the overall quality of the learnt model to cause DoS, or targeted, controlling the model to misclassify samples into an adversary’s desired class (known as \textit{backdoor attack}). \textbf{Our work focuses on the untargeted model poisoning attack (MPA), but the outcome is not merely causing DoS. It opens up a new attack surface (dubbed fine-grained control attacks) in FL that is more stealthy than DoS.}

To mitigate MPAs in FL, a growing number of defense options have been proposed ~\cite{XiaoyuCao2020FLTrustBF,SauravPrakash2020MitigatingBA}, where the server employs a robust AGR. Their main approach is to compare the clients' local updates and exclude abnormal ones before aggregation. Recent studies ~\cite{MinghongFang2019LocalMP,ViratShejwalkar2022ManipulatingTB} show further adaptive or optimized MPAs can still bypass robust AGRs, especially when AGRs are known to the adversary. 
These novel MPAs have a huge impact on FL compared to traditional \textit{data poisoning attacks}.

However, most existing advanced MPAs suffer from (part of) the following limitations:
\begin{itemize}[leftmargin=*]
    \item Assumptions about the adversary are often strong. For instance, the adversary knows the AGRs adopted by the central server~\cite{MinghongFang2019LocalMP}, or can assess all benign clients' local model updates (or their compele statistics)~\cite{GiladBaruch2019ALI}. 
    Practically, without these assumptions, the attack would be greatly weakened.
    \item Advanced MPAs~\cite{ViratShejwalkar2022ManipulatingTB} commonly craft poisoned updates via maximizing the distance between poisoned and benign updates towards a prescribed direction (e.g., inverse of the global model). 
    However, the prescribed direction itself does not guarantee good attack performance, which in turn makes the advanced MPAs built upon it less optimal.
    \item The adversary cannot precisely control the impact of the submitted poisoned model updates. To cause DoS, the adversary needs to craft poisoned updates farthest from the global model, which makes the attack easier to detect.
\end{itemize}

To address aforementioned challenges, we introduce a novel (untargeted) \underline{\textbf{F}}lexible \underline{\textbf{M}}odel \underline{\textbf{P}}oisoning \underline{\textbf{A}}ttack (\ct{\ourmethod}). \ct{\ourmethod} works efficiently without additional information about the FL system, except for the global model that all clients receive during normal FL training. The high-level approach of \ct{\ourmethod} is to find the current benign reference model by exploiting the key information from the historical global models.
In contrast to optimizing for the best perturbation magnitude along a prescribed malicious direction (e.g., inverse of the global model, inverse standard deviation) suggested in existing works ~\cite{MinghongFang2019LocalMP,ViratShejwalkar2022ManipulatingTB}, \ct{\ourmethod} fine-tunes the reference model to directly obtain a malicious model with low accuracy and small perturbations. Fig.~\ref{Fig:SchematicDigram} intuitively illustrates the difference between \ct{\ct{\ourmethod}} and existing attacks through visualizing the loss landscape.

Our contributions are highlighted as follows:
\begin{itemize}[leftmargin=*]

    \item We introduce a new MPA for FL risk assessment. Local poisoning models are designed by attacking vulnerable parts of neural networks and minimizing the degree of modification to evade detection of Byzantine-robust AGRs. Extensive experiments reveal that, \ct{\ourmethod} achieves better attack impact, albeit with weaker and more realistic assumptions on the adversary.
    
    \item Historical data in FL has been recently employed for detecting anomalous updates~\cite{Zhang2022FLDetectorDF}. To the best of our knowledge, our \ct{\ourmethod} is the first MPA to focus on both the vertical and horizontal information of FL systems and exploit them to assist poisoning attacks.
    
    \item We introduce a new concept of model poisoning to the community, which is not to merely target for degradation of the global model's performance indiscriminately (a.k.a. DoS), but to conceal malicious clients under a precise control of the extent of the attack. 
    This poses an even greater threat to commercialized FL applications since an adversary can inject this subtle attack into peer FL systems to gain an advantage in competing FL services without getting noticed.
\end{itemize}

\section{Background and Related Work}

\subsection{Federated Learning}

Assume there is a general FL system, comprising of $N$ clients and a central server. Each client $i\in[N]$ has a local training data set $\mathcal{D}_{i}$. We define the vector $g \in \mathbb{R}^d$ as the parameters of the co-learned model, while $j(g;c)$ denotes the loss function (e.g., \textit{cross-entropy}) of $g$ on input-output pairs $c$. Formally, the optimization objective of client $i$ is $J_i(g)=\mathbb{E}_{c \sim \mathcal{D}_i}[j(g ; c)]$. FL aims to find a model $g^{\star}$ that minimizes the sum of losses among clients:

\begin{IEEEeqnarray}{rCL}
    g^{\star} = \min _{g \in \mathbb{R}^d}\left[J(g):=\sum_{i=1}^N \beta_i^t J_i(g)\right],
\end{IEEEeqnarray}
where client $i$ is weighted by $\beta_i^t>0$ in the $t$-th iteration.

Specifically, FL executes the following three steps: 
\begin{itemize}[leftmargin=*]

      \item \textbf{Step I:} In the $t$-th iteration, the global model $g^{t}$ is synchronized by the central server to all clients or a subset of them. 
      
      \item \textbf{Step II:} After receiving $g^{t}$, the $i$-th client trains a new local model $w_{i}^{t}$ over $\mathcal{D}_{i}$ by solving the optimization problem 
      $\arg\min_{w_{i}^{t}}J_{i}(w_{i}^{t})$ and then send its local model update $ \nabla_{i}^{t}=w_{i}^{t}-g^{t}$ to the server.
      
      \item \textbf{Step III:} The central server aggregates all the local model updates based on an aggregation algorithm $F_{\agr}(\cdot)$ (e.g., FedAvg~\cite{HBrendanMcMahan2016CommunicationEfficientLO}) to obtain global update for this round as $\nabla^{t}=F_{\agr}(\nabla_{i\in{[N]}}^{t})$.
      
      Finally, the server updates the global model, i.e., $g^{t+1}=g^{t}-\eta \cdot \nabla^{t}$, where $\eta$ is the global learning rate.
\end{itemize}

\subsection{Poisoning Attacks on FL}
\label{Sec:AttackIntroduce}
Poisoning attacks are divided into two categories, namely \textit{model poisoning} and \textit{data poisoning} attack. This paper focuses on the more dangerous MPAs.

\subsubsection{Model Poisoning Attack (MPA)} MPA is more dangerous as the adversary can directly manipulate local model updates. Previous MPAs include sign flipping attacks~\cite{LipingLi2019RSABS}, Gaussian attacks~\cite{CongXie2018GeneralizedBS}, etc.
But these attacks cause a significant difference between poisoned and benign updates, which makes it easier to detect. 
As discussed below, state-of-the-art (SOTA) MPAs are either adaptive-based or optimization-based.

\ct{LIE}~\cite{GiladBaruch2019ALI} computes the mean $\mu$ and standard deviation $\sigma$ of all updates, and finds the 
fixed coefficient $z$ based on the total number of clients. The final generated malicious update is a weighted sum: $\widetilde{\nabla} = \mu + z\sigma$. But \ct{LIE} unrealistically assumes complete knowledge of all clients' updates.
Under the assumption that the exact defenses are known to adversaries,~\cite{MinghongFang2019LocalMP} tailored attacks for different defenses by formulating it as an optimization problem of finding the optimized poisoned updates against known defenses. 
Derived from this,~\cite{ViratShejwalkar2022ManipulatingTB} proposes \ct{Min-Max} and \ct{Min-Sum}, which minimize the maximum and the sum of distances between poisoned and all the benign updates, respectively. As reported in~\cite{ViratShejwalkar2022ManipulatingTB}, choosing an appropriate perturbation direction for the malicious update $\widetilde{\nabla}$ is the key to an effective MPA.
Different heuristic directions, e.g., inverse standard deviation or inverse sign, are used by them, while our work directly investigates the optimal $\widetilde{\nabla}$.
~\cite{pmlr-v115-xie20a} devised a MPA known as inner production manipulation (\ct{IPM}). \ct{MPAF}~\cite{XiaoyuCao2023MPAFMP} launches an attack based on fake clients injected into the FL system. But both \ct{IPM} and \ct{MPAF} are limited in breaking a few simple defenses.

\subsection{Existing Byzantine-Robust AGRs}

\label{Sec:Defense}
We divide the current defenses into three categories, namely \textit{statistics-based}, \textit{distance-based} and \textit{performance-based}, depending on the principles that the server relies to detect or evade suspicious models. \textit{Distance-based} defenses focus on comparing the distance between local updates to find outliers. \textit{Statistics-based} defenses utilize statistical features to remove statistical outliers.  \textit{Performance-based} defenses depend on a validation dataset to evaluate the quality of uploaded models. 
But some recent \textit{performance-based} defenses are screened by evaluating the performance of local models with the need for the central server to use a clean dataset~\cite{WeiWan2022ShieldingFL,XiaoyuCao2020FLTrustBF}. This violates the privacy principle of FL, and similar to other Byzantine-Robust AGRs like~\cite{pmlr-v139-karimireddy21a,ViratShejwalkar2022ManipulatingTB}, we do not evaluate these defenses in detail.

\section{Flexible Model Poisoning Attack (FMPA)}

\label{Sec:Method}
In this section, before presenting our \ct{FMPA}, the threat model is discussed.

\paragraph{Adversary’s Capabilities.} Suppose an adversary controls $m$ of total $N$ clients 
(i.e., malicious clients). We assume that the malicious clients in all adversarial settings are below 50\% (i.e., $(m/N)<0.5$), otherwise the adversary will easily manipulate FL. The adversary can obtain the global model broadcasted each round and manipulate the local updates on malicious clients.

\paragraph{Adversary’s Knowledge.} 
We assume a weaker adversary who neither knows the exact AGR used by the server nor any information from other $(N-m)$ benign clients (e.g., models/gradients or local data). The adversary only has access to the data of the malicious clients as in normal FL training.

\paragraph{Adversary’s Goals.} Similar to existing MPAs, some attackers want to indiscriminately downgrade the accuracy of the global model at its testing time.
Different from existing studies, some attackers may want to precisely control the effect of poisoning while maintaining concealment so as to gain advantages in establishing a competing FL service. 

Next, we will introduce the method of designing concealed and effective malicious models in FL. We start with an analysis to show how to build the general model poisoning framework for deep classification models and then adapt it to flexibly poison FL systems to meet the above goals.

\subsection{Model Poisoning Problem Formulation}

Consider a classification task with $L$ classes and a DNN model $\Theta$. For a clean input-output pair $\left({x}_{i}, y_{i}\right)$, we denote the output probability for the $l$-th category in a neuron network as $Z_{l}\left({x_i},  \Theta \right)$, where $x_i \in \mathcal{X}$ and its ground-truth label $y_i \in \mathcal{Y}$. 
Predictions for a test sample $x$ are usually measured using the \textit{softmax function}:
\begin{equation}
P_l\left(\Theta\right)=Z_{l}\left({x}, 
\Theta\right)=\operatorname{softmax}(v(x)),
\end{equation}
where $P = [P_l]_{l=1}^L$ represents the confidence score vector of each category. 
Here, $P_{l} \geq 0$, $\sum_{l\in [L]} P_{l}=1$ and $v(x)$ is the logit vector of the penultimate layer of the neural network. Finally, the predicted label of the test sample $(x,y)$ is $\hat{y}=\arg \max _{l \in[1, L]} P_{l}$. Hereinafter, we follow the MPA literature to focus on image classification tasks for easy presentation.

The ultimate goal of any untargeted MPA is to add adversarial perturbations to network parameters to make the inference results deviate from the true labels of the images. This is similar to the case of adversarial attack~\cite{ChristianSzegedy2013IntriguingPO,ZekunZhang2020LearningOT}, with the difference that perturbations are added to the input images. Inspired by this observation, we aim to design an appropriate adversarial objective function to account for creating malicious models by gradient-based optimization. 
Generally, any untargeted attack can be defined as the following constrained optimization problem:
\begin{equation}
\begin{aligned}
&\min &&\mathcal{H}\left(\Theta, \Theta+\delta_p\right),  \\
&\text{ s.t. }  && y \neq \hat{y} =  \arg \max_{l} Z_{l}(x, \Theta+\delta_p),
\end{aligned}
\end{equation}
where $\mathcal{H}$ is a measure of the distance (e.g., $\ell_{0}, \ell_{2}$, or $\ell_{\infty}$ distance) between the benign and malicious models, $\delta_p$ is a perturbation crafted by the adversary to poison the model $\Theta$.

To address the difficulty of directly solving the above optimization problem (\textit{non-convex constraints} and \textit{non-linear}), we adopt the heuristic from \cite{carlini2017towards} to express it in a form that is suitable for gradient-based optimization. Say $\Theta^{\prime} = \Theta+\delta_p$, we define a new loss function $\mathcal{L}\left(\Theta^{\prime} \right)=\mathcal{L}(\Theta+\delta_p)$ such that the constraint $ y\neq \hat{y} = \arg \max _{l} Z_{l}(x, \Theta^{\prime})$ is satisfied if and only if $\mathcal{L}\left(\Theta^{\prime}\right)\leqslant0$. We use a variant \textit{hinge loss} to characterize this requirement, i.e., 

\begin{IEEEeqnarray}{rCL}
\mathcal{L}\left(\Theta^{\prime}\right) := \max \left(0, \ P_{y}(\Theta^{\prime})-\max _{j \neq  y} P_{j}(\Theta^{\prime}) \right),
\end{IEEEeqnarray}
where penalties are incurred when the entry of the ground-truth label is the largest among all labels.

Use the typical $\ell_{2}$-norm to measure the similarity between benign and poisoned models, the untargeted MPA problem can be represented by the following equation:
\begin{equation}
\min \lambda \cdot\|\delta_p\|_{2}+\mathcal{L}(\Theta+\delta_p),
\label{Eq:My_loss}
\end{equation}
where $\lambda$ is a scalar coefficient to control the relative importance of perturbation. The perturbation $\delta_p$ (and the malicious model $\Theta^{\prime}$) can be now solved by gradient-based optimization through backpropagation.

The above formulation of the general untargeted MPA reveals the key fact that, regardless of centralized or FL, the desired model perturbation $\delta_p$ (and thus a poisoned model) can be directly optimized. This is how the proposed \ct{FMPA} differentiates from the existing works \cite{ViratShejwalkar2022ManipulatingTB,GiladBaruch2019ALI,MinghongFang2019LocalMP}, where the standard deviation noise or the inverse of the global model (refer to Fig.~\ref{Fig:SchematicDigram} for visualization) is used while optimization is enforced to the magnitude of the prescribed direction perturbation to bypass defenses.

\subsection{Optimization-Based Malicious Model Update}
Instead of poisoning a fixed model $\Theta$ as shown above, poisoning of FL is a sequential process along its entire training. To apply the new formulation to FL, the adversary first needs a \textbf{reference model} as a starting point to craft a poisoned update. The basic principle to be followed is that an ideal reference model should approximate the aggregation result $F_{\agr}(w_{\{i\in[N]\}})$ in the absence of attacks. Next, we illustrate how an attacker can obtain an appropriate reference model.

\paragraph{Different Reference Models.} We discuss three reference models, namely \textit{historical reference model} (HRM), \textit{alternative reference model} (ARM) and \textit{predictive reference model} (PRM). 
HRM is a naive approach that takes the historical global model $g^t$ of round $t$ as the reference.
ARM is usually adopted in existing MPAs~\cite{MinghongFang2019LocalMP,ViratShejwalkar2022ManipulatingTB}, and it actually simulates the average aggregation of all $m$ local updated models $w_{\{i\in[m]\}}$ available to the attacker, i.e., $\frac{1}{m}\sum_{i \in[m]} w_{i}$. It is clear that the quality of ARM depends on the number $m$ of compromised clients and their local datasets.
PRM provides a better way of computing the reference model and does not depend on any other conditions. It predicts ${g}^{t+1}$ from global historical information as discussed below.

We use exponential smoothing~\cite{RobertGBrown1961TheFT}, a well-known lightweight forecasting algorithm for time series, to build a PRM. Assume that $g^{t}$ is global model in the $t$-th round, $s_{t}^{(1)}$ and $s_{t}^{(2)}$ are the first and second order exponential smoothing values for round $t$, respectively.
According to \textit{Taylor series} and \textit{Exponential smoothing}, we calculates the estimator $\widehat{g}^{t+1}$ as the reference model:
\begin{IEEEeqnarray}{rCL}
\widehat{g}^{t+1}=\mathbb{P}(s_{t}^{(1)},s_{t}^{(2)})=\frac{2-\alpha}{1-\alpha} s_{t}^{(1)}-\frac{1}{1-\alpha} s_{t}^{(2)},
\label{Eq:Predict}
\end{IEEEeqnarray}
where 
$\begin{cases} 
s_{t}^{(1)}=\alpha g^{t}+(1-\alpha)s_{t-1}^{(1)}\\
s_{t}^{(2)}=\alpha s_{t}^{(1)}+(1-\alpha)s_{t-1}^{(2)}, \end{cases}$ 
and $\alpha\in(0, 1)$ adjusts the portion of the latest global model. We take the average of the ﬁrst several true values to initialize $s_{0}^{(1)}$ and $s_{0}^{(2)}$, and recursively computes $s_{t}^{(1)}$ and $s_{t}^{(2)}$. Then we can easily update  $\widehat{g}^{t+1}$ using only $s_{t}^{(1)}$ and $s_{t}^{(2)}$.

It is worth noting that unlike the literature, prediction of  ${g}^{t+1}$ is used for detecting malicious clients \cite{Zhang2022FLDetectorDF}, we are the first to use it to aid in designing MPA. As will be validated in the experiments, PRM achieved the most stable poison effect among the three reference models.

\subsubsection{Indiscriminate Untargeted Attack (\ct{I-FMPA})}

In this scenario, for each training round of FL, malicious clients upload crafted poisoned updates to affect the
to-be-aggregated central model. Note that we keep all poisoned updates the same to maximize the attack effect.
Taking $\widehat{g}^{t+1}$ as the reference model, malicious model can be crafted by solving Eq.~(\ref{Eq:My_loss}) using \textit{stochastic gradient descent}. Algorithm~\ref{alg:algorithm1} gives a complete description of \ct{I-FMPA}.

To cause DoS, the best attack performance that untargeted MPAs can achieve is random guessing. To maximize attack impact, for a classification model with $L$ categories, we set the accuracy threshold $\tau$ in Algorithm~\ref{alg:algorithm1} to be $(\frac{1}{L})$.
First, the adversary divides the available data $\mathbb{D}$ on compromised clients into a training set $\mathbb{D}^{\textnormal{train}}$ and a validation set $\mathbb{D}^{\textnormal{val}}$. 
In {lines 5-6}, the adversary initializes the malicious model by the reference model $\widehat{g}^{t+1}$. In {lines 8-9}, the adversary finds a \textbf{certified radius} $\mathcal{R}$, which is subsequently used to tailor malicious updates to a suitable size for circumventing defenses.
In {lines 13-14}, the adversary trains a fresh malicious model $\Theta^{\prime}$ with the optimization objective of Eq.~(\ref{Eq:My_loss}). 
The desired accuracy threshold $\tau$ is ensured through validating on $\mathbb{D}^{\textnormal{val}}$ in line 15. In line 21, the adversary conceals itself by projecting the update on a ball of radius $\delta$ ($\|\delta\|_2<\mathcal{R}$) around $\overline{\nabla}^{t}$.

\renewcommand{\algorithmicrequire}{\textbf{Input:}} 
\renewcommand{\algorithmicensure}{\textbf{Output:}} 

\begin{algorithm}[tb]
    \caption{Generating Malicious Model (Update)}
    \label{alg:algorithm1}
    \textbf{Input}: $m$ malicious clients, global model $g^t\in \mathbb{R}^d$ broadcasted at round $t$, accuracy threshold $\tau$, malicious clients data $\mathbb{D}=\left\{\mathcal{D}_1, \ldots, \mathcal{D}_{m}\right\}$ available to the adversary, the first and second order exponential smoothing {$s_{t}^{(1)}$, $s_{t}^{(2)}$}.\\  
    \textbf{Function}: $\mathcal{A}(\cdot)$: optimizer for parameter updates  \\
    $\mathbb{P}(\cdot,\cdot)$: Predict the next-round model ($g^{t}\rightarrow \widehat{g}^{t+1}$) as Eq.~(\ref{Eq:Predict})\\
    \textbf{Output}: malicious model $\Theta^{\prime}$ and model update $\widetilde{\nabla}^{t}$ 
    \begin{algorithmic}[1]
    \If{$t=0$}
        \State $s_{0}^{(1)}\leftarrow g^{0}, s_{0}^{(2)}\leftarrow g^{0}$
        \textcolor{blue}{\Comment{initialize values }}
    \Else
        \State Split $\mathbb{D}$ into train set $\mathbb{D}^{\textnormal{train}}$ and validation set $\mathbb{D}^{\textnormal{val}}$
        \State Set $\widehat{g}^{t+1}\leftarrow \mathbb{P}(s_{t}^{(1)},s_{t}^{(2)})$
        \textcolor{blue}{\Comment{construct estimator}}
        \State Initialize malicious model $\Theta^{\prime} \leftarrow \widehat{g}^{t+1}$, certified radius $R\leftarrow 0$, average update $\overline{\nabla}^{t}\leftarrow\widehat{g}^{t+1}-g^{t}$
        \For{each malicious client $i=1$ \textbf{to} $m$}
            \State Set $\nabla_{i}^t\leftarrow \operatorname{NormalLocalTraining}(g^t,\mathcal{D}_{i})$
            \State $R=\operatorname{Max}\left(\left\|\nabla_{i}^t- \overline{\nabla}^{t}\right\|_2, R\right)$\textcolor{blue}{\Comment{\small{find certified radius}}}
        \EndFor
        \For{local epoch $e=1$ \textbf{to} $E$}
            \For{batch $b$ \textbf{in} $\mathbb{D}^{\textnormal{train}}$}
                \State Calculate $\mathcal{L}\left(\Theta^{\prime}, \mathbb{D}_b^{\textnormal{train}}\right)$ based on Eq.~(\ref{Eq:My_loss})
                \State $\Theta^{\prime} \leftarrow \Theta^{\prime} - \mathcal{A}\left(\nabla \mathcal{L}(\Theta^{\prime}, \mathbb{D}_b^{\textnormal{train}})\right)$
                \If {$\operatorname{Accuracy}(\Theta^{\prime},\mathbb{D}^{\textnormal{val}})\le \tau$}       
                    \State \textbf{Goto} \textit{Line} $20$
                    \textcolor{blue}{\Comment{early stopping}}
                \EndIf
            \EndFor
        \EndFor
        \State Set $\widetilde{\nabla}^{t} \leftarrow \Theta^{\prime} -g^{t}$ \textcolor{blue}{\Comment{calculate malicious update}}


        \State 
        $\widetilde{\nabla}^{t}=\prod_{\overline{\nabla}^{t}+\left\{\delta \in \mathbb{R}^d:\|\delta\|_2 <R\right\}}\left(\widetilde{\nabla}^{t}\right)$
        \textcolor{blue}{\Comment{project it on ball}}
        \State \textbf{Output} $\Theta^{\prime}$ and $\widetilde{\nabla}^{t}$ 
    \EndIf
    \State Update $s_{t+1}^{(1)}$ and $s_{t+1}^{(2)}$ according to  Eq.~(\ref{Eq:Predict})
\end{algorithmic}
\end{algorithm}


As mentioned in Section~\ref{Sec:Defense}, we focus on two main categories of defenses, i.e., \textit{statistics-based} and \textit{distance-based}. Since models close to each other may well infer the same label for the same data point, more similar models imply better robustness (against poisoning attack)~\cite{pmlr-v151-panda22a}.
In these categories of defenses, outliers of uploaded model updates (which we describe with the \textbf{certified radius} $\mathcal{R}$ in the proof) are likely to be considered as suspected attackers and those updates will be reprocessed (e.g., eliminated, purified, corrected) before aggregation. However, our attack ensures that poisoned models are close to the clique of benign ones. As practiced in~\cite{ViratShejwalkar2022ManipulatingTB}, such a fundamental attack principle achieves good performance because it can seriously confuse defenses by avoiding making itself an outlier. Further, \ct{I-FMPA} searches the best-influential poisoned model in the neighborhood of the benign reference model, while the search distance is upper bounded by the maximum distance between any normally trained model (available to the attacker) and the reference model. We also provide the theoretical analysis of \ct{I-FMPA}.

\begin{definition}[$\varrho$-Corrupted Poisoning Attack]
Let $\zeta:=\left\{\nabla_1^{t}, \ldots,\nabla_{k}^{t} \ldots,\nabla_{N}^{t}\right\}$ be the union of local updates $\nabla\in \mathbb{R}^d$ among all clients at round $t$. 
The set of poisoned unions $\mathcal{S}(\varrho)$ consists of $\zeta^\star:=\left\{\widetilde{\nabla}_1^{t}, \ldots,\widetilde{\nabla}_{m}^{ t},\nabla_{m+1}^{t},\ldots,\nabla_{k}^{t},\ldots,\nabla_{N}^{t}\right\}$, which is identical to $\zeta$ except the updates $\widetilde{\nabla}_{\{i\in [m]\}}^{t}$ is a $\varrho$-corrupted version of $\nabla_{\{i\in [m]\}}^{t}$. That is, for any round $t$, we have $\zeta^\star=\zeta+\epsilon$ for certain $\epsilon$ with $\left\|\epsilon\right\|_2\leq\varrho$.
\end{definition}

\begin{definition}[Certified Radius] Suppose $\zeta$ is a union,
 the mean of the updates among all clients is $\overline{\nabla}^t=\frac{1}{N} \sum_{i \in[1,N]} \nabla_i^t$. We call $\mathcal{R}$ a certified radius for $\zeta$ if 
$\bm{\sup}\left\{\left\|\nabla_i^t-\overline{\nabla}^t\right\|_2: i \in [N]\right\}=\mathcal{R}$. For simplicity, we denote it as $\zeta \models\mathcal{R}$.
\end{definition}

\begin{lemma}
For a union and it associated certified radius $\zeta\models\mathcal{R}$, we assume that $\zeta^\star\in \mathcal{S}(\varrho)$ is a $\varrho$-corrupted version of $\zeta$ obtained by Algorithm~\ref{alg:algorithm1}.
Algorithm~\ref{alg:algorithm1} guarantees that, for any $\zeta^\star$, $\zeta^\star$ $\models$ $\mathcal{R}$ is still satisfied.
\end{lemma}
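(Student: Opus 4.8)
The goal is to show that projecting each poisoned update onto a ball of radius $\mathcal{R}$ around $\overline{\nabla}^t$ (as done in line 21 of Algorithm~\ref{alg:algorithm1}) preserves the certified radius of the union. The plan is to work directly from the two definitions. Let $\zeta \models \mathcal{R}$, so by the Certified Radius definition $\sup\{\|\nabla_i^t - \overline{\nabla}^t\|_2 : i \in [N]\} = \mathcal{R}$, where $\overline{\nabla}^t = \frac{1}{N}\sum_{i \in [N]} \nabla_i^t$. First I would fix notation for the poisoned union $\zeta^\star = \{\widetilde{\nabla}_1^t, \ldots, \widetilde{\nabla}_m^t, \nabla_{m+1}^t, \ldots, \nabla_N^t\}$ and observe that I must bound $\|\nu - \bar\mu\|_2$ for every element $\nu$ of $\zeta^\star$, where $\bar\mu$ denotes the mean of $\zeta^\star$.

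The key steps, in order: (i) For the $N-m$ benign updates that are unchanged, their distance to $\overline{\nabla}^t$ is already at most $\mathcal{R}$ by hypothesis. (ii) For the $m$ poisoned updates, the projection step in line 21 explicitly forces $\|\widetilde{\nabla}_i^t - \overline{\nabla}^t\|_2 < \mathcal{R}$ (note $\overline{\nabla}^t$ in the algorithm is the attacker's estimate, so I would first argue, or assume as is implicit in the setup, that the attacker's $\overline{\nabla}^t = \widehat{g}^{t+1} - g^t$ coincides with — or is identified with — the true mean used in the definition; this is the modeling assumption baked into Definition of the certified radius). Hence every element of $\zeta^\star$ lies within distance $\mathcal{R}$ of $\overline{\nabla}^t$. (iii) The remaining subtlety is that the certified radius of $\zeta^\star$ is defined relative to the \emph{new} mean $\bar\mu = \frac{1}{N}\sum \nu$, not $\overline{\nabla}^t$. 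I would handle this by showing the mean only moves slightly: since the corruption satisfies $\|\epsilon\|_2 \le \varrho$ (from the $\varrho$-Corrupted definition) and in fact the projection guarantees each coordinate-block perturbation is $<\mathcal{R}$, one gets $\|\bar\mu - \overline{\nabla}^t\|_2 = \frac{1}{N}\|\sum_{i\in[m]}(\widetilde{\nabla}_i^t - \nabla_i^t)\|_2 \le \frac{m}{N}\mathcal{R} < \mathcal{R}$, and then a triangle-inequality argument combined with the fact that the benign points were originally the extremal ones shows the supremum over $\zeta^\star$ is still $\mathcal{R}$ — or more carefully, that it does not exceed $\mathcal{R}$, which is what "$\zeta^\star \models \mathcal{R}$" should be read to mean here.

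The main obstacle I anticipate is reconciling the two different "means" — the attacker-side $\overline{\nabla}^t$ in the algorithm versus the global mean $\overline{\nabla}^t$ in the Certified Radius definition — and arguing that the supremum is \emph{exactly} $\mathcal{R}$ rather than merely $\le \mathcal{R}$ after the mean shifts. I would resolve this either by adopting the convention (consistent with how $\models$ is used in the Lemma) that $\zeta^\star \models \mathcal{R}$ means $\mathcal{R}$ remains a valid certifying bound (i.e. the $\sup$ is $\le \mathcal{R}$), in which case steps (i)–(iii) close the argument cleanly via triangle inequality; or, if exact equality is required, by noting that at least one benign update attains distance $\mathcal{R}$ from $\overline{\nabla}^t$ and controlling how that distance changes under the small mean shift, which is the one place a genuinely quantitative estimate (rather than a one-line inequality) is needed.
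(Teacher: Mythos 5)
Your steps (i) and (ii) are exactly the paper's own proof: the paper's argument consists solely of the observation that line 21 projects each poisoned update onto a ball of radius less than $\mathcal{R}$ around $\overline{\nabla}^{t}$, so the poisoned updates satisfy $\|\widetilde{\nabla}_i^{t}-\overline{\nabla}^{t}\|_2<\mathcal{R}$ while the benign updates are untouched. The paper implicitly reads $\zeta^\star\models\mathcal{R}$ relative to the \emph{fixed} reference point $\overline{\nabla}^{t}$ and never recomputes the mean of $\zeta^\star$; it also silently identifies the attacker-side quantity $\overline{\nabla}^{t}=\widehat{g}^{t+1}-g^{t}$ from Algorithm~\ref{alg:algorithm1} with the population mean $\frac{1}{N}\sum_i\nabla_i^t$ in the Certified Radius definition. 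You correctly flag both of these as unstated modeling assumptions, which is to your credit --- the paper does not acknowledge either.

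However, your step (iii), where you go beyond the paper, does not close the way you claim. First, the bound $\frac{1}{N}\bigl\|\sum_{i\in[m]}(\widetilde{\nabla}_i^t-\nabla_i^t)\bigr\|_2\le\frac{m}{N}\mathcal{R}$ is unjustified: the projection controls $\|\widetilde{\nabla}_i^t-\overline{\nabla}^t\|_2$, not $\|\widetilde{\nabla}_i^t-\nabla_i^t\|_2$; the triangle inequality through $\overline{\nabla}^t$ only gives $2\mathcal{R}$ per client, so the mean shift is bounded by $\frac{2m}{N}\mathcal{R}$ (or by $\varrho/N$ from the corruption definition). Second, and more importantly, once the mean shifts by some $\Delta>0$, a benign update that attained the supremum $\mathcal{R}$ relative to $\overline{\nabla}^t$ can lie at distance up to $\mathcal{R}+\Delta$ from the new mean $\bar\mu$; the triangle inequality yields only $\sup\le\mathcal{R}+\Delta$, not $\sup\le\mathcal{R}$. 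So under the ``recomputed mean'' reading, the lemma does not follow from your argument (and is in fact false in general). The only reading under which the statement holds is the one the paper tacitly adopts --- $\models$ taken relative to the unchanged reference $\overline{\nabla}^t$ --- in which case your step (iii) is unnecessary and steps (i)--(ii) already constitute the complete (one-line) proof. You should either adopt that convention explicitly and delete step (iii), or accept that with the recomputed mean only the weaker conclusion $\zeta^\star\models\mathcal{R}+\frac{2m}{N}\mathcal{R}$ is provable.
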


\begin{proof}
Algorithm~\ref{alg:algorithm1} guarantees that the distance between poisoned updates $\widetilde{\nabla}_{\{{i\in[m]}\}}^{t}$  and benign average update $\overline{\nabla}^{t}$ are upper bounded by the certified radius $\mathcal{R}$ by projecting the poisoned update onto a ball of radius $\delta$ ($\|\delta\|_2<\mathcal{R}$) around $\overline{\nabla}^{t}$, i.e., $\left\|\mathbb{E}[\widetilde{\nabla}_{\{i\in[m]\}}^{t}]-\overline{\nabla}^{t}\right\|_2 < R$.
\end{proof}

\begin{theorem}
\label{Theorem:1}
From lemma 1, in union $\zeta^\star$, there must be a benign client's update $\nabla_{k}^{t}$ ($k\in[m+1,N]$) such that $\left\|\mathbb{E}[\widetilde{\nabla}_{\{i\in[m]\}}^{t}]-\overline{\nabla}^{t}\right\|_2<\left\|\nabla_k^t-\overline{\nabla}^{t}\right\|_2$. If $\nabla_{k}^{t}$ is removed from the  $\zeta^\star$, we have $\zeta^\star$ $\models$ $(R-\epsilon)$, where $\epsilon$ is a small positive constant value. That is,
$\left\|\mathbb{E}[\widetilde{\nabla}_{\{i\in[m]\}}^{t}]-\overline{\nabla}^{t}\right\|_2\leq \mathcal{R}-\epsilon <\left\|\nabla_k^t-\overline{\nabla}^{t}\right\|_2 \leq R$.
\end{theorem}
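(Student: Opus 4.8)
The plan is to reduce everything to a statement about a finite set of real deviations, feeding off the two facts supplied by Lemma~1. Write $u_i$ for the update of client $i$ inside $\zeta^\star$, so that $u_i=\widetilde{\nabla}_i^t$ for $i\in[m]$ and $u_i=\nabla_i^t$ for $i\in[m+1,N]$. Lemma~1 gives $\zeta^\star\models\mathcal{R}$, i.e. $\sup\{\|u_i-\overline{\nabla}^t\|_2:i\in[N]\}=\mathcal{R}$, and its proof additionally establishes the \emph{strict} bound $\|\mathbb{E}[\widetilde{\nabla}_{\{i\in[m]\}}^t]-\overline{\nabla}^t\|_2<\mathcal{R}$ coming from the projection in line~21 of Algorithm~\ref{alg:algorithm1}. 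Since $[N]$ is finite, the supremum is attained: some index $k^\star$ satisfies $\|u_{k^\star}-\overline{\nabla}^t\|_2=\mathcal{R}$.

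First I would rule out $k^\star\le m$. If $k^\star$ were malicious, then because all $m$ malicious clients submit the same crafted update we would get $\|\mathbb{E}[\widetilde{\nabla}_{\{i\in[m]\}}^t]-\overline{\nabla}^t\|_2=\|u_{k^\star}-\overline{\nabla}^t\|_2=\mathcal{R}$, contradicting the strict projection bound from Lemma~1. Hence $k:=k^\star\in[m+1,N]$ is a benign client, and $\|\nabla_k^t-\overline{\nabla}^t\|_2=\mathcal{R}>\|\mathbb{E}[\widetilde{\nabla}_{\{i\in[m]\}}^t]-\overline{\nabla}^t\|_2$, which is precisely the first assertion of the theorem.

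For the removal part I would define $\epsilon:=\mathcal{R}-\max\{\|u_i-\overline{\nabla}^t\|_2:i\in[N],\,i\neq k\}$, the gap between $\mathcal{R}$ and the second-largest deviation in $\zeta^\star$. By the definition of the certified radius applied to the reduced union, $\zeta^\star\setminus\{\nabla_k^t\}\models(\mathcal{R}-\epsilon)$. Since every malicious index lies in $\{i:i\neq k\}$, the common poisoned deviation is bounded by that second-largest value, i.e. $\|\mathbb{E}[\widetilde{\nabla}_{\{i\in[m]\}}^t]-\overline{\nabla}^t\|_2\le\mathcal{R}-\epsilon$; chaining this with the first assertion and with $\|\nabla_k^t-\overline{\nabla}^t\|_2=\mathcal{R}\le R$ produces the displayed inequality $\|\mathbb{E}[\widetilde{\nabla}_{\{i\in[m]\}}^t]-\overline{\nabla}^t\|_2\le\mathcal{R}-\epsilon<\|\nabla_k^t-\overline{\nabla}^t\|_2\le R$.

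The main obstacle is getting $\epsilon>0$ rather than just $\epsilon\ge0$: deleting a single maximizer strictly shrinks the certified radius only when that maximizer is essentially unique among the benign updates. I would close this either by a genericity assumption (pairwise distinct deviations for the $N-m$ benign updates, which holds for almost every realization of honest training) or by iterating the deletion: the set of benign indices attaining $\mathcal{R}$ is finite and nonempty, removing them one at a time must eventually leave a strictly smaller supremum, and the same chain of inequalities goes through with $k$ the last index deleted. A minor point to verify en route is that the $\mathbb{E}[\cdot]$ wrapper is harmless: the $m$ poisoned updates are identical and are projected into the open ball of radius $\mathcal{R}$ about $\overline{\nabla}^t$, so by convexity their expectation (over the optimizer's randomness) stays inside that open ball and every inequality above holds verbatim for $\mathbb{E}[\widetilde{\nabla}_{\{i\in[m]\}}^t]$.
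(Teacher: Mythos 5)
The paper never actually proves Theorem~1 --- it is asserted as an immediate consequence of Lemma~1 with no \verb|proof| environment --- so your write-up is effectively the only proof on the table. The route you take is the natural (and essentially the only) one: the supremum over a finite index set is attained, the projection step forces every poisoned deviation strictly below $\mathcal{R}$, hence the maximizer must be a benign index, and the displayed chain of inequalities follows once $\epsilon$ is defined as the gap to the second-largest deviation. Granting Lemma~1 exactly as stated, the first assertion of the theorem does follow from your argument.

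There are, however, two genuine gaps. The first is the one you flag yourself: $\epsilon>0$ fails whenever the benign maximizer is not unique, and both of your repairs alter the statement (genericity adds a hypothesis the theorem does not carry; iterated deletion removes several updates where the theorem removes exactly one). As written, the theorem's claim that deleting a single $\nabla_k^t$ yields $\zeta^\star\models(\mathcal{R}-\epsilon)$ with $\epsilon>0$ is simply false in the tied case, and no choice of wording in your proof can avoid that without amending the statement. The second gap is one you inherit but lean on without comment: your pivotal step ``some $k^\star$ attains $\mathcal{R}$ and cannot be malicious'' requires reading $\zeta^\star\models\mathcal{R}$ as an \emph{exact} equality of the supremum. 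The paper's proof of Lemma~1 only establishes that the poisoned deviations are bounded above by $\mathcal{R}$; if the maximizer of the original union $\zeta$ happened to be one of the $m$ compromised clients' honest updates, then after corruption the supremum over $\zeta^\star$ can drop strictly below $\mathcal{R}$, and nothing guarantees that any benign deviation exceeds the poisoned one at all. So the existence of the benign $\nabla_k^t$ in your first paragraph is only as sound as an equality that Lemma~1 asserts but does not prove. (A related inconsistency you silently smooth over: the Definition's $\overline{\nabla}^t$ is the empirical mean over all $N$ clients, whereas Algorithm~1 sets $\overline{\nabla}^t=\widehat{g}^{t+1}-g^t$ and computes $R$ only over the $m$ compromised clients; identifying the two is an additional unstated assumption.)
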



\begin{example}
Consider a scenario where the defense knows the number of attackers (e.g., 20\%). Then, for safe aggregation, at least 20\% of the elements in union $\zeta^\star$ will be reprocessed. Theorem~\ref{Theorem:1} shows that there exist benign updates with the largest deviation (i.e., outliers) from the mean. They have a higher probability of being reprocessed by defenses compared to our poisoned updates.
\end{example}

\begin{table*}[ht]  
   \centering
   \begin{threeparttable}[b]
    \resizebox{0.8\textwidth}{!}{
    \begin{tabular}{c|c|c||c|c|c|c|c|c|c}
    \hline
    \multicolumn{1}{c|}{\multirow{1}[4]{*}{\makecell[c]{\textbf{Dataset} \\ \textbf{(Model)}}}} & \multirow{1}[4]{*}{\textbf{Defense}} & \multirow{1}[4]{*}{\makecell[c]{\textbf{No attack (\%)}}} & \textbf{\footnotesize{AGR-tailored}} & \multicolumn{6}{c}{\textbf{AGR-agnostic}} \\
\cline{4-10}          &       &       & \ct{AGRT} & \ct{LIE} &\ct{IPM}  & \ct{MPAF} & \ct{\footnotesize{Min-Max}} & \ct{\footnotesize{Min-Sum}} & \ct{\footnotesize{I-FMPA}}\\
    \hline
    \hline
    \multicolumn{1}{c|}{\multirow{8}[2]{*}{\makecell[c]{CIFAR-10 \\ (Resnet20)}}} & \ct{Krum}  & 67.41  & 25.43  & 37.55  & 29.26  &  32.83 & 25.36 & 48.30 & \cellcolor[HTML]{EFEFEF}\textbf{56.10 }\\
          & \ct{Mkrum} & 84.53  & 19.57  & 35.11  & 23.94  &  27.51 & 45.54 & 48.92 & \cellcolor[HTML]{EFEFEF}\textbf{53.36}\\
          & \ct{Bulyan} & 83.78  & 26.80  & 47.24  & 16.85  &  21.02 & 48.93 & 55.56 & \cellcolor[HTML]{EFEFEF}\textbf{58.26}\\
          & \ct{Median} & 80.62  & 25.37  & 38.05  &  31.22 &  37.63 & 51.22 & \textbf{56.74} & \cellcolor[HTML]{EFEFEF}51.43\\
          & \ct{TrMean} & 84.20  & 28.51  & 35.70  & 24.37  &  38.26 & 53.27 & 21.97 & \cellcolor[HTML]{EFEFEF}\textbf{57.82}\\
           & \ct{CC}  & 83.88  & \textit{—} & 13.77 & 12.38 & 17.63 & 29.15  & 22.49  & \cellcolor[HTML]{EFEFEF}\textbf{33.51}\\
          & \ct{AFA}   & 82.57  & \textit{—} & 16.38  &  7.55 &18.07   & 41.34 & 37.34 & \cellcolor[HTML]{EFEFEF}\textbf{48.70}\\
          & \ct{DNC}  & 84.15  & \textit{—} & 3.63  & 2.59 & 3.08  & 5.18 & 4.06 & \cellcolor[HTML]{EFEFEF}\textbf{9.77}\\
    \hline
    \hline
    \multicolumn{1}{c|}{\multirow{8}[2]{*}{\makecell[c]{MNIST \\ (FC)}}}  &\ct{Krum}  & 88.92  & 20.77  & 10.61  &  17.91 & 5.84 & 3.79 & \textbf{25.56} & \cellcolor[HTML]{EFEFEF}19.36\\
          & \ct{Mkrum} & 96.26  & 11.03  & 3.43  & 6.14  & 12.05  & 15.61 & 13.11 & \cellcolor[HTML]{EFEFEF}\textbf{21.08 }\\
          & \ct{Bulyan} & 95.38  & 7.44  & 7.02  &  4.30 & 7.81  & 5.35 & 8.07 & \cellcolor[HTML]{EFEFEF}\textbf{10.62}\\
          & \ct{Median} & 93.82  & 1.61  & 1.93  & 3.14  & \textbf{12.85} & 7.65  & 3.02 & \cellcolor[HTML]{EFEFEF}5.97\\
          & \ct{TrMean} & 96.74  & 1.77  & 2.30  &  1.88 & 9.67  & 9.25 & 8.84 & \cellcolor[HTML]{EFEFEF}\textbf{13.62 }\\
          & \ct{CC}  & 95.71  & \textit{—} & 1.48  &0.96 & 1.50 & 1.80  & 2.11  & \cellcolor[HTML]{EFEFEF}\textbf{5.76}\\
          & \ct{AFA}   & 96.35  & \textit{—} & 2.29  & 1.13  & 2.84  & 3.23 & 4.06 & \cellcolor[HTML]{EFEFEF}\textbf{8.64}\\
          & \ct{DNC}  & 95.97  & \textit{—} & 0.21  & 0.39 & 1.21  & 0.35 & 1.26 & \cellcolor[HTML]{EFEFEF}\textbf{2.15}\\
    \hline
    \hline
    \multicolumn{1}{c|}{\multirow{8}[2]{*}{\makecell[c]{EMNIST \\ (CNN)}}} & \ct{Krum}  & 61.93  & 16.65  & 2.53  &  15.38 &  19.32 & 4.81 & 18.83 & \cellcolor[HTML]{EFEFEF}\textbf{36.93}\\
          & \ct{Mkrum} & 80.21  & 22.85  & 14.61  & 12.60 & 26.77 & 52.72 & 37.93 & \cellcolor[HTML]{EFEFEF}\textbf{63.97}\\
          & \ct{Bulyan} & 80.24  & 18.27  & 20.78 & 10.42 &  19.81 & 17.23 & 24.55 & \cellcolor[HTML]{EFEFEF}\textbf{25.79}\\
          & \ct{Median} & 72.16  & 8.36  &  17.69 & 22.75 & 24.93 & 24.36 & 19.20& \cellcolor[HTML]{EFEFEF}\textbf{36.47}\\
          & \ct{TrMean} & 79.86  & 4.96  & 13.71  & 4.51  &  16.90 & \textbf{26.70} & 17.43 & \cellcolor[HTML]{EFEFEF}23.62\\
          & \ct{CC}  & 80.25  & \textit{—} & 12.44 &13.59 & 8.19 & 15.83  & 19.58   & \cellcolor[HTML]{EFEFEF}\textbf{25.46}\\
          & \ct{AFA}   & 78.92  & \textit{—} & 9.05  & 6.27  &24.06   & 35.75 & 32.28 & \cellcolor[HTML]{EFEFEF}\textbf{48.40}\\
          & \ct{DNC}  & 80.31  & \textit{—} & 6.80  & 5.47 & 8.05 & 15.69 & 11.62 & \cellcolor[HTML]{EFEFEF}\textbf{19.54}\\
    \hline
    \end{tabular}}
    \caption{Comparison of the \textit{attack impact} $\phi$  between SOTA MPAs and \ct{I-FMPA}.}
    \label{tab:AttackImpact}
    \end{threeparttable}
\end{table*}

\subsubsection{Fine-Grained Control Attack (\ct{F-FMPA})}

As emphasized in the Introduction, a new feature of our \ct{\ourmethod} is to enable the adversary to precisely control the performance drop of the central aggregated model. For instance, when the FL training is finished, the aggregated model $g^\prime$ (in the presence of fine-grained model poisoning) is inferior to model $g$ (without model poisoning) with $10\%$ drop in accuracy. 
Considering the fact that the FL central server knows neither the final accuracy of the converged model nor the clients' data, it is difficult, if not impossible, for the server to even realize that it is suffering from MPA.
In particular, competing FL service providers can easily gain an advantage with such precise and subtle MPAs, hence opening a new attack surface for FL.

To perform our fine-grained control attack (precisely manipulating the MPA effects), the adversary will set $\tau$ to his desired accuracy level, say $\tau = 80\%$, and call Algorithm~\ref{alg:algorithm1} to produce a malicious model $\Theta^{\prime}$ (skip execution of lines 7-10 and 21). The remaining key issue is how to design model updates $\widetilde{\nabla}_{\textnormal{Precise}}$ for malicious clients such that the aggregation result will be exactly $\Theta^{\prime}$. This can be achieved analytically by looking at the server-side aggregation. Specifically, for FedAvg with and without MPA, we conclude:
\begin{IEEEeqnarray}{rCL}
   g^{t+1}  &=& g^t - \eta (m \cdot \nabla + \sum\nolimits_{i=m+1}^N  \nabla^i)/N,  \nonumber \\ 
   \Theta^{\prime} &=& g^t - \eta (m \cdot \widetilde{\nabla}_{\textnormal{Precise}} + \sum\nolimits_{i=m+1}^N  \nabla^i)/N, 
\end{IEEEeqnarray}
where $\nabla$ is the averaged model update of all the malicious clients without launching MPA. 
In view of this fact, it is easy to see the malicious model update is:
\begin{IEEEeqnarray}{rCL}
\widetilde{\nabla}_{\textnormal{Precise}} = \nabla + \frac{N}{\eta m}(g^{t+1} -\Theta^{\prime}).
\end{IEEEeqnarray}

Replacing the unknown $g^{t+1}$ with the predicted model $\widehat{g}^{t+1}$ using Eq.~(\ref{Eq:Predict}), the goal of precisely controlling the MPA by attackers is achieved. In the case that $\rho=\frac{N}{\eta m}$ is unknown, \ct{F-FMPA} search for a suitable scaling factor
$\rho$ by iteratively increasing it each round and validate the effect on $\mathbb{D}^\text{val}$. Experiments show that \ct{F-FMPA} also generalizes well to other robust AGRs (especially those based on filtered averaging).

\begin{figure*}[ht]
    \centering    \includegraphics[width=\textwidth]{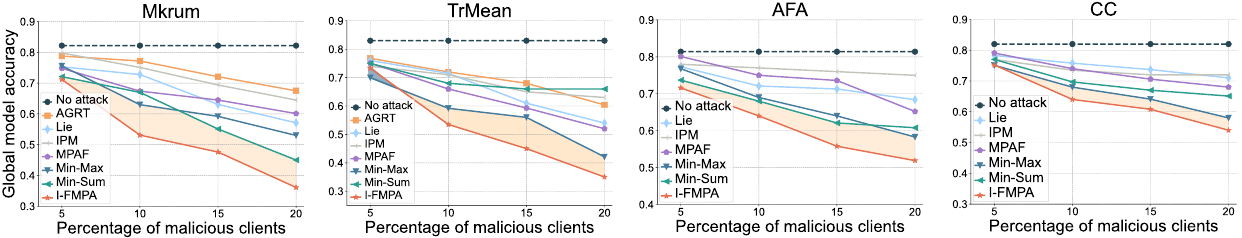}
    \caption{The global model accuracy against increasing percentage of malicious clients under different AGRs.}
    \label{Fig:PercentageOfAttackers}
\end{figure*}

\section{Experimental Results and Analyses}
\subsection{Experimental Setup}

\paragraph{Datasets and Models.}
We conduct experiments on the following datasets and models,  MNIST~\cite{YannLeCun1998GradientbasedLA} with a fully connected network (FC), EMNIST~\cite{GregoryCohen2017EMNISTEM} with a CNN, and CIFAR-10 with Resnet20/VGG-16, to evaluate \ct{\ourmethod} under various settings.

\paragraph{Parameter and Attack Settings for FL.}
Unless otherwise specified, we assume $20\%$ malicious clients in all adversarial setups. Given that poisoning FL with IID data is the hardest~\cite{MinghongFang2019LocalMP}, following existing work~\cite{ViratShejwalkar2022ManipulatingTB}, our evaluation mainly considers the case where the data is IID distributed among clients (EMNIST dataset is set to be non-IID). We also investigate the effects of different non-IID degrees on \ct{\ourmethod}.

\paragraph{Poisoning Attacks for Comparison.} 
We adopt the following MPAs for comparison, i.e., \ct{AGRT}~\cite{MinghongFang2019LocalMP}, \ct{LIE} ~\cite{GiladBaruch2019ALI}, \ct{IPM}~\cite{pmlr-v115-xie20a}, \ct{MPAF}~\cite{XiaoyuCao2023MPAFMP}, and \ct{Min-Max} \& \ct{Min-Sum} ~\cite{ViratShejwalkar2022ManipulatingTB}.

\paragraph{Evaluated Defenses.}
We consider eight classical defenses, i.e., \ct{Krum} \& \ct{Mkrum}~\cite{PevaBlanchard2017MachineLW}, \ct{Median}~\cite{DongYin2018ByzantineRobustDL}, \ct{Trmean} (Trimmed-mean)~\cite{DongYin2018ByzantineRobustDL}, \ct{Norm-bounding}~\cite{AnandaTheerthaSuresh2019CanYR}, \ct{Bulyan}~\cite{ElMahdiElMhamdi2018TheHV}, \ct{FABA}~\cite{QiXia2019FABAAA}, and \ct{AFA}~\cite{LuisMuozGonzlez2019ByzantineRobustFM},  as well as two newly proposed defenses, i.e., \ct{CC} (Centered Clip)~\cite{pmlr-v139-karimireddy21a} and \ct{DNC}~\cite{ViratShejwalkar2022ManipulatingTB}.

\paragraph{Measurement Metrics.}
Let $\textnormal{acc}_\textnormal{benign}$ represent the accuracy of the optimal global model without attacks and $\textnormal{acc}_\textnormal{drop}$ represent the accuracy degradation caused by attacks.
We define \textit{attack impact} $\phi$ = $\textnormal{acc}_\textnormal{drop} / \textnormal{acc}_\textnormal{benign}$ $\times 100\%$ as the degree of performance degradation of the global model. Obviously, for a given attack, the larger $\phi$ is, the better the attack effect is.

\subsection{Evaluation Results}
We compare \ct{I-FPMA} with the previously described MPAs and the results are given in Tab.~\ref{tab:AttackImpact}. The column \textit{No attack} reports the global accuracy without attacks, while the remaining columns show the \textit{attack impact} $\phi$ under various AGRs. In particular, \ct{AGRT} is AGR-tailored (AGR is known to adversaries), while the others are all AGR-agnostic (AGR is unknown to adversaries). 
For fairness, all attacks except \ct{LIE} are performed without knowledge of updates from benign clients (\ct{LIE} uses statistics from all updates).
From Tab.~\ref{tab:AttackImpact}, the \textit{attack impact} $\phi$ of \ct{I-FMPA} is on average $\boldsymbol{2.4\times}$, $\boldsymbol{2.7\times}$, $\boldsymbol{3.5\times}$ and $\boldsymbol{2.8\times}$ higher than \ct{AGRT}, \ct{LIE}, \ct{IPM}, and \ct{MPAF}, respectively.
Under the same assumptions, \ct{I-FMPA} outperforms \ct{Min-Max/Sum} in most cases, showing that our attack imposes a stronger threat for causing DoS than SOTA MPAs.

Besides DoS attacks, we evaluate how precisely \ct{F-FMPA} can control the attack effect. For simplicity, we define the adversary's expected accuracy drop is $\xi$ (usually between $0-10\%$). Tab.~\ref{tab:PreciseAttack} shows the \textit{attack deviation} (i.e., absolute value of the difference between expected and actual accuracy drop) under different combinations of defenses and $\xi$.  We observe that \ct{F-FMPA} generalizes well to other defenses, showing that $\textbf{22}$ out of $\textbf{27}$ ($81.48\%$) combinations achieve an \textit{attack deviation} of less than 1\%. 

\begin{table}[bp]
\resizebox{0.45\textwidth}{!}
{
\begin{tabular}{ccccc}
\hline
\multirow{2}{*}{\begin{tabular}[c]{@{}c@{}}\textbf{Aggregation} \\ \textbf{algorithm}\end{tabular}} & \multirow{2}{*}{\begin{tabular}[c]{@{}c@{}}\textbf{No attack}\\ \textbf{(\%)}\end{tabular}} & \multicolumn{3}{c}{\textbf{Attack deviation (\%)} \bm{$\downarrow$}} \\ \cline{3-5} 
 &  & \bm{$\xi=2.5\%$} & \bm{$\xi=5\%$} & \bm{$\xi=10\%$} \\ \hline
\ct{FedAvg} & 84.70  & 0.17 {\scriptsize ($\pm$0.08)} & 0.23 {\scriptsize ($\pm$0.11)}& 0.43 {\scriptsize ($\pm$0.18)} \\ 
\ct{Norm-bounding} & 84.48 & 0.31 {\scriptsize ($\pm$0.15)}& 0.38 {\scriptsize ($\pm$0.19)}& 0.62 {\scriptsize ($\pm$0.30)} \\
\ct{Mkrum} & 84.53 & 0.35 {\scriptsize ($\pm$0.22)}& 0.65 {\scriptsize ($\pm$0.30)}& 0.71 {\scriptsize ($\pm$0.34)} \\ 
\ct{Median} & 80.62 & 0.50 {\scriptsize ($\pm$0.29)}& 1.39 {\scriptsize ($\pm$0.45)}& 2.17 {\scriptsize ($\pm$0.72)}\\ 
\ct{Bulyan} & 83.78 & 0.34 {\scriptsize ($\pm$0.22)}& 0.74 {\scriptsize ($\pm$0.34)} & 1.45 {\scriptsize ($\pm$0.53)}\\ 
\ct{TrMean} & 84.20 & 0.36 {\scriptsize ($\pm$0.14)}& 0.79 {\scriptsize ($\pm$0.26)} & 0.86 {\scriptsize ($\pm$0.34)}\\ 
\ct{CC} & 83.88 & 0.44 {\scriptsize ($\pm$0.19)}& 0.87 {\scriptsize ($\pm$0.31)}&  0.95 {\scriptsize ($\pm$0.38)}\\ 
\ct{AFA} & 82.57  & 0.39 {\scriptsize ($\pm$0.20)} & 0.65 {\scriptsize ($\pm$0.28)} & 0.88 {\scriptsize ($\pm$0.36)} \\ 
\ct{DNC} & 84.15 & 0.58 {\scriptsize ($\pm$0.40)}& 1.21 {\scriptsize ($\pm$0.51)} & 3.29 {\scriptsize ($\pm$0.86)} \\ \hline
\end{tabular}
}
\caption{\textit{Attack deviation} of \ct{F-FMPA} under different AGRs and $\xi$.}
\label{tab:PreciseAttack}
\end{table}

\subsection{Ablation Studies}
\paragraph{Impact of the percentage of attackers.~}
{Fig.~\ref{Fig:PercentageOfAttackers}} shows the impact of various MPAs as the percentage of attackers changes from $5\%$ to $20\%$ on CIFAR-10 with Resnet20. We note that, due to the difference in design rationale, \ct{FMPA} is always superior to existing attacks for most combinations of malicious client percentages, defenses, datasets, and models.

\paragraph{Effects of the reference model.} We compared the attack effect of three reference models (i.e., HRM, ARM and PRM). The results are shown in Fig.~\ref{Fig:ReferenceModel}. Among them, the performance of HRM is always poor. As expected, ARM does not have an advantage when there are fewer malicious clients. The proposed PRM performs consistently well, revealing it to be a cost-effective option.

\paragraph{Impact of the non-IID degree.} 
We generate the non-IID MNIST using the method in~\cite{MinghongFang2019LocalMP} and evaluate the performance of MPAs under different non-IID degrees. The \textit{attack impact} of all MPAs increases as non-IID degree increases, and \ct{I-FMPA} performs better than SOTA MPAs.

\begin{figure}[t]
    \centering    \includegraphics[width=0.5\textwidth]{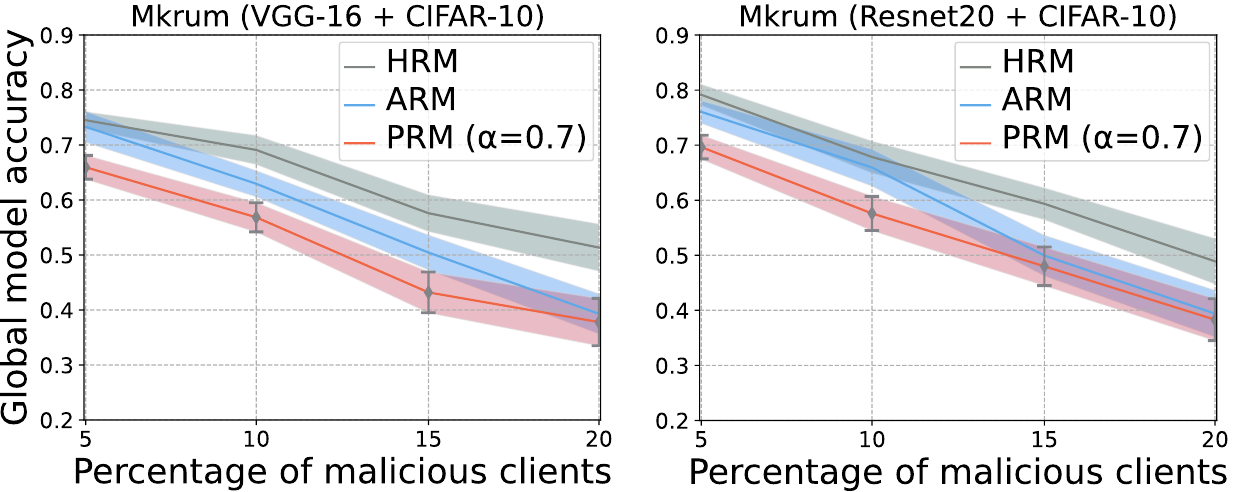}
    \caption{Impacts of three different reference models on attack effect when malicious clients are armed with \ct{I-FMPA} ($\alpha=0.7$).}
    \label{Fig:ReferenceModel}
\end{figure}

\begin{figure}[t]
\begin{center}
\includegraphics[width=0.5\textwidth]{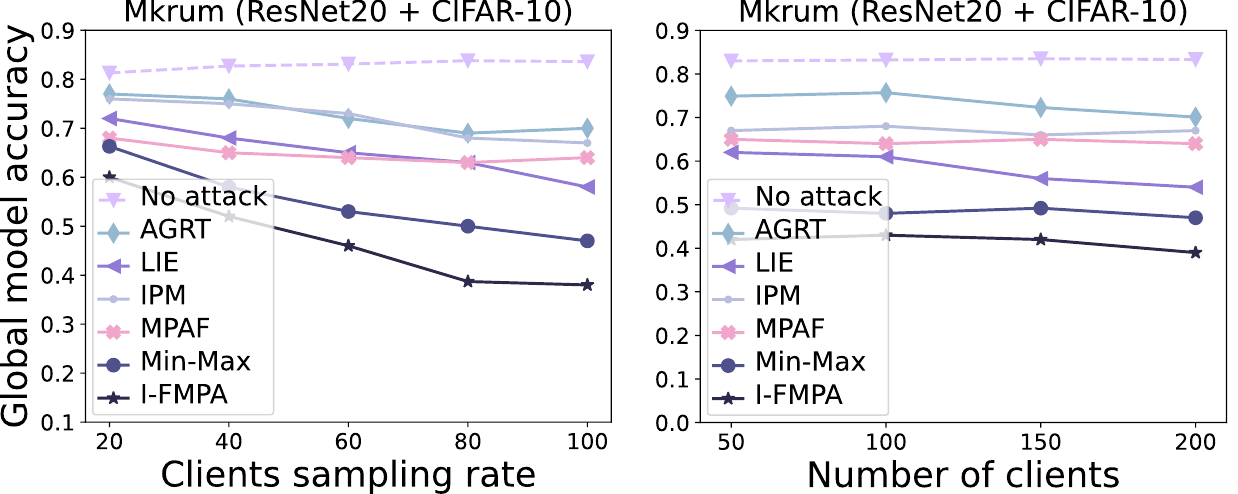}
\end{center}
\caption{Testing the effect of various MPAs as we vary the clients sampling rate and total number of clients.}
\label{Fig:ClientSampling}
\end{figure}

\paragraph{Different FL scenarios.}
Fig.~\ref{Fig:ClientSampling} shows that all MPAs become less effective when FL uses a random client sampling strategy, because the adversary cannot consistently corrupt the global model. But \ct{I-FMPA} still performs the best with different total client counts and sample rates.

\section{Conclusion}
This paper introduces a novel model poisoning framework: \ct{\ourmethod}. To overcome the inefficiency of attacks caused by the prescribed perturbation direction in existing works, our \ct{I-FMPA} utilizes historical data in FL to iteratively optimize the malicious model through neuron perturbations to launch more effective DoS attacks. It outperforms the SOTA attacks, albeit with weaker assumptions. We also propose \ct{F-FMPA}, opening up a new attack surface in FL MPAs. It enables malicious service providers to surpass competitors stealthily.
Currently, \ct{\ourmethod} still utilizes data from malicious clients. We leave it as future work to explore \ct{\ourmethod} on zero-shot attacks.

\section*{Acknowledgments}
This work was supported in part by National Natural Science Foundation of China under Grant No. 62032020, National Key Research and Development Program of China under Grant 2021YFB3101201, the Open Fund of Science and Technology on Parallel and Distributed Processing Laboratory (PDL) under Grant WDZC20205250114. Leo Yu Zhang is the corresponding author.

\bibliographystyle{named}
\bibliography{ijcai23}

\end{document}